\newtheorem{prop}{Proposition}
\DeclareMathOperator{\Var}{Var}
\DeclareMathOperator{\Uni}{Unif}
\DeclareMathOperator{\iid}{iid}
\title{Variational hybridization and transformation for \\large inaccurate noisy-or networks}
\author{
    Yusheng Xie\hspace{1cm}Nan Du\hspace{1cm}Wei Fan\hspace{1cm}Jing Zhai\hspace{1cm}Weicheng Zhu\\
  Baidu Research\\
  Sunnyvale, CA 94089 \\
  \texttt{\{xieyusheng,dunan,fanwei03,zhaijing01,zhuweicheng\}@baidu.com} \\
}
\begin{document}

\maketitle

\begin{abstract}
Variational inference provides approximations to the computationally intractable posterior distribution in Bayesian networks. A prominent medical application of noisy-or Bayesian network is to infer potential diseases given observed symptoms. Previous studies focus on approximating a handful of complicated pathological cases using variational transformation. Our goal is to use variational transformation as part of a novel hybridized inference for serving reliable and real time diagnosis at web scale. We propose a hybridized inference that allows variational parameters to be estimated without disease posteriors or priors, making the inference faster and much of its computation recyclable. In addition, we propose a transformation ranking algorithm that is very stable to large variances in network prior probabilities, a common issue that arises in medical applications of Bayesian networks. In experiments, we perform comparative study on a large real life medical network and scalability study on a much larger (36,000x) synthesized network.
\end{abstract}

\section{Introduction}

Noisy-or Bayesian network (NOBN) is a popular class of statistical models in modeling observable events and their unobserved potential causes. 
One of the best known medical applications of NOBN is Quick Medical Reference (\texttt{QMR-DT})~\citep{qmr}.
\texttt{QMR-DT} describes expert-assessed relationships between 4,000+ observable binary symptom variables (collectively denoted as $S$) and 500+ binary latent disease variables (collectively denoted as $D$) as illustrated in Figure \ref{fig:intro} (a).

We improve variational inference for a large \texttt{QMR-DT} style NOBN in areas of scalability, stability, and accuracy to previously unattainable or untested levels. As part of a medical messaging bot, the inference goal is to perform reliable real time diagnosis at web scale. Figure \ref{fig:intro} (b) shows the messaging bot's interface. The ongoing project aims to serve a substantial portion of Internet users who experience health issues (e.g., 3 to 8 million daily active users\footnote{Assume an average person is sick 2-4 days per year and our reachable population is 600 to 800 million.}) with reliable disease diagnosis that is more accurate and accessible than text-based web searches, web searches that emphasize retrieval similarity but lack clinical technicality (e.g., \emph{38.5 $^{\circ}$C fever lasting 3 days} and \emph{39.5 $^{\circ}$C fever lasting 8 days}. The latter could be 20x more fatal in probability). The developing bot has completed 1,000+ organic, non-scripted dialogues with 100+ qualified human testers. Assessed by 50+ licensed doctors, the network plans to cover \emph{all} conceivable human diseases and health conditions\footnote{A sub-network focusing on maternal and infant care is completed and used in our experiments.}: approximately 40,000 (80x that of \texttt{QMR-DT}) according to \emph{the 10th revision of the International Statistical Classification of Diseases and Related Health Problems} (ICD-10). To the best of our knowledge, the aforementioned scales make it the largest medical application of noisy-or Bayesian networks. 

Recent advances in modern machine learning and artificial intelligence quickly proliferate far beyond the traditional Bayesian framework. But for mission critical applications such as medical diagnosis, one prefers Bayesian network-based approach for reasoning instead of entirely data driven approach. The reasons are due to traceable outcome, easy debuggability, and  provenance.
Data source unreliability and scarcity also prevent some medical applications from taking full advantage of the large body of data driven algorithms that can be quickly accelerated by larger datasets (e.g., machine translation~\citep{nmt}, speech recognition~\citep{deepspeech}). For example, the \emph{Caroli} disease\footnote{Caroli disease is a type of congenital dilatation of intrahepatic bile duct. It has the code Q44.6 in ICD-10.} has fewer than 250 recorded cases worldwide, making it almost impossible to ``gather/label more data points''. On the other hand, no disease should be too rare to deserve attention From an ethical perspective, even a 1-in-1,000,000 chance (technically \emph{extremely rare}) translates into over 6,000 suffering individuals worldwide. From an academic perspective, understanding rare diseases brings irreplaceable medical knowledge.

 The expert-assessed probability of observing symptom $f$ given \emph{only} disease $d$ is denoted as $P(f^+\mid d^+)$. We use $\pi(f)$ to denote $\{d\mid d\in D, P(f^+\mid d^+) >0\}$, the set of diseases that could cause $f$ with non-zero probability. Like \texttt{QMR-DT}, we assume\footnote{Without loss of generality, the leak probabilities~\citep{j99} are omitted in our discussion.} $P(d^+)$ for each $d\in D$: the prior probability of having disease $d$ without observing any symptoms. We further define $P(f^-)$ and $P(d^-)$ notations as $P(f^-) = 1 - P(f^+)$ and $P(d^-) = 1 - P(d^+)$, respectively.

In a typical diagnosis session, the user first inputs her positive and negative findings: $F^+=\{f^+_1,f^+_2,\ldots\}\subset S$, $F^-=\{f^-_1,f^-_2,\ldots\}\subset S$. Then the model performs inference to calculate $P(F^+,F^-)$, which is the crux in deriving the conditional $P(d_i^+ \mid F^+,F^-)$ for each $d\in D$. 

\begin{figure}
\centering
\subfigure[]{
   \includegraphics[width=4.4cm]{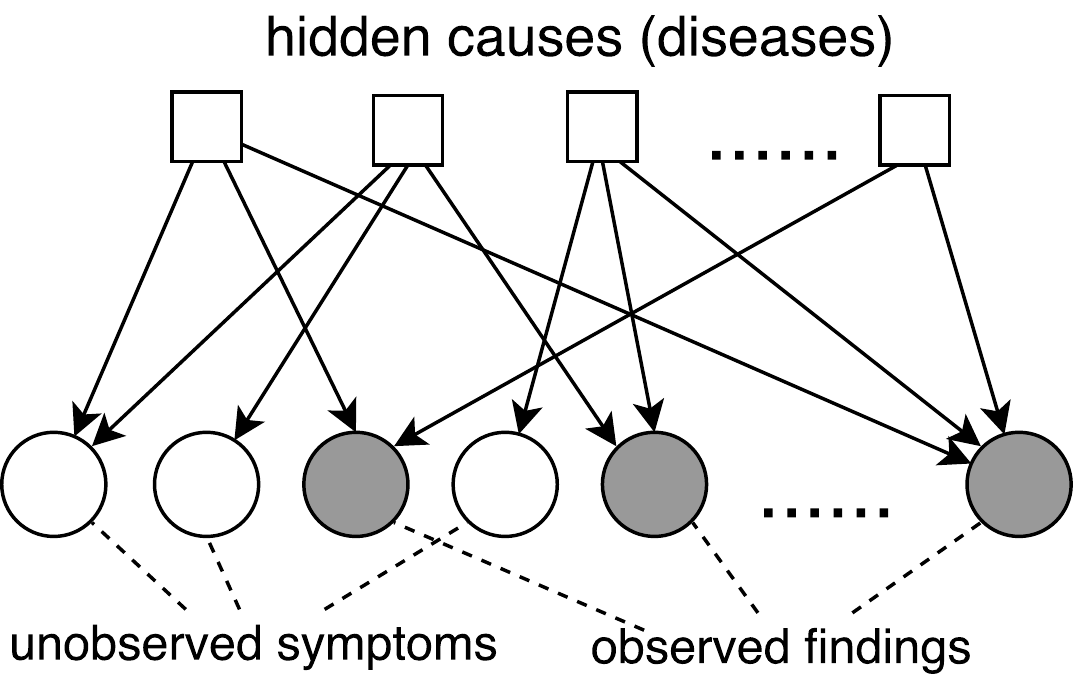}
}\hspace{-0.5mm}%
\subfigure[]{
  \includegraphics[width=4.4cm]{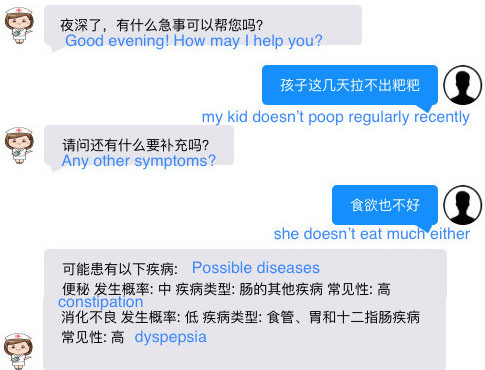}
}\hspace{-0.5mm}%
\subfigure[]{
   \includegraphics[width=4.4cm]{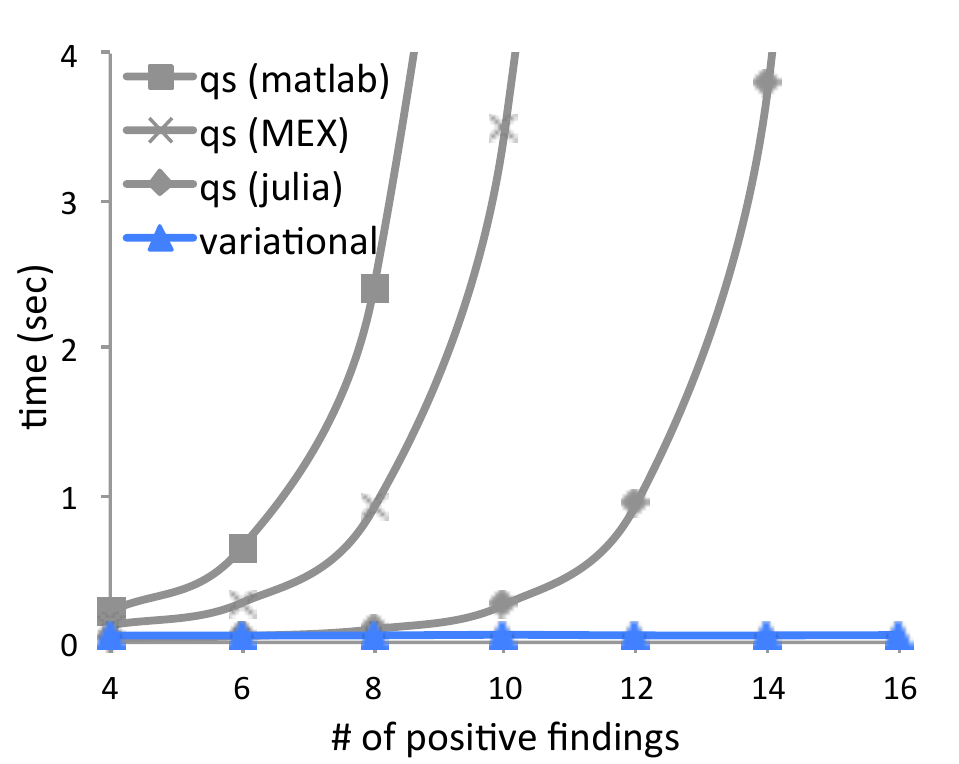}
}
\vspace{-0.5cm}
\caption{(a) Graphical model structure of \texttt{QMR-DT}. The shaded round nodes are observed nodes ($f^+$ or $f^-$). All variables are binary. (b) Screenshot of the diagnosis bot. (c) Running time comparison of exact Quickscore (qs) and variational inference for $|F^+|=4,6,\ldots,16$. \texttt{qs(matlab)} and \texttt{qs(MEX)} are provided by \citep{bnt}. \texttt{qs(julia)} and \texttt{variational} are authors' own implementation.}
\label{fig:intro}
\end{figure}

\subsection{Background on variational inference}
The exact inference for $P(F^+,F^-)$ is intractable~\citep{cooper90} and intractability motivates investigations into approximation inference algorithms. The variational method~\citep{j99,jj99} and the mean field local approximation~\citep{ng00} are both hybrid approximation algorithms.

To describe the variational approximation, let $\theta_{ji} \equiv -\log P(f_j^-\mid d^+_i)$. \citep{j99,jj99} show that
\small
\begin{equation}\label{e4}
\begin{split}
P\left(f_j^+\mid \pi(f_j)^+\right) = e^{f\left( \sum_{i=1}^{|\pi(f_j)|}\theta_{ji} \right)} \le e^{\sum_{i=1}^{|\pi(f_j)|}\xi_j\theta_{ji} -f^*(\xi_j)}\equiv P(f_j^+\mid \pi(f_j)^+,\xi_j)\\
\end{split}
\end{equation}
\normalsize
and
\small
\begin{equation}\label{e5}
\begin{split}
P(f_j^+\mid \xi_j) &=\prod_{d_i\in \pi(f_j)}\left[ P(f_j^+\mid d_i^+,\xi_j)\cdot P(d_i^+) + P(f_j^+\mid d_i^-,\xi_j)\cdot P(d_i^-)\right]\ge P(f_j^+),\\
\end{split}
\end{equation}
\normalsize
where $\xi_j$ is the free variational parameter, $f(x)\equiv\log\left(1-e^{-x}\right)$, and $f(x)$'s convex conjugate function takes the form $f^*(\xi)=-\xi \log \xi + (\xi+1) \log(\xi+1)\text{, for }\xi>0$.
Equation \ref{e5} transforms $P(f_j^+)$ into its variational upper bound $P(f_j^+\mid \xi_j)$ using the inequality from conjugate duality.

Breaking $F^+=\{f^+_1,f^+_2,\ldots\}$ into the partition $F_1^+$ and $F_2^+$ allows exact inference on $F_2^+$ and variational inference on $F_1^+$. \citep{jj99} (JJ99) calculates the joint variational posterior as
\small
\begin{equation}\label{e10}
\begin{split}
P&_{JJ99}(F_1^+, F_2^+,F^-\mid\Xi^{\min}) \\
& = e^{- \sum_{j=1}^{|F_1^+|} f^*(\xi^{\min}_{j})} \prod_{d_i\in \pi(F_1^+)} \left[e^{\sum_{j=1}^{|F^+|}\xi^{\min}_{j} \theta_{ji}} \cdot P(d^+_i\mid F_2^+,F^-)+ P(d^-_i\mid F_2^+,F^-) \right],
\end{split}
\end{equation}
\normalsize
where $\Xi=\{\xi_1,\xi_2,\ldots\}$.
Finding $\arg\min_{\Xi} P(F^+\mid \Xi)$ can be relaxed to finding $\arg\min_{\xi_j} \log P(f_j^+\mid \xi_j)$ for each $\xi_j\in\Xi$. 
The $\xi$-convexity permits second order optimization methods (CVX) to find each $\xi_j$. From Equation \ref{e7}, the 1st order partial derivatives are 
\small
\begin{equation}\label{e9}
\begin{split}
{\partial\over \partial \xi_j}\log P(f_j^+\mid \xi_j)  = \log{\xi_j \over 1+\xi_j} + \sum_{d_i\in \pi(f_j^+)} {\theta_{ji}\over p_i\cdot e^{-\xi_j \theta_{ji}}+1},\\
\end{split}
\end{equation}
\normalsize
where $p_i\equiv P(d^-_i)/P(d^+_i)$ is the inverse prior odds for the $i$th disease. The 2nd order partial derivatives are derived mechanically.
Figure \ref{fig:intro} (c) illustrates the complexity of exact and variational inference in real application.
More discussion on existing inference algorithms are in related works section (see Table \ref{tab:nobn}).

\section{Inaccuracy in widely-ranged disease priors}
Inaccurate hidden variable prior is a recognized~\citep{nips13,uai06} but often avoided~\citep{ai02,pr09,ijar06b} issue in NOBN.
Inaccuracy in disease prior is among the most likely errors in constructing a NOBN for medical applications. Real life disease priors can span several orders of magnitude. For example, \emph{acne} (ICD-10 code: L70.0) affects 80\% to 90\% teenagers in the western world~\citep{acne} while syndromes like the \emph{Caroli} disease have historical infection rates less than 0.00001\%. It is very likely, even for medical experts or statistical estimators, to misjudge the prior probability by an order of magnitude relative to other very rare or very common diseases. So it is beneficial to obtain fast and accurate variational algorithms that are \emph{resistant} to the large variances in disease priors.

In the following two sections, we propose inference algorithms that can greatly immunize the current variational inference against inaccuracy in disease priors.

\section{Variational-first hybridization and joint hybridization}
The $F^+=F^+_1\cup F^+_2$ partition employed in JJ99 is a realization of the classic hybrid paradigm: balancing accuracy and runtime over the entire $F^+$ by 1) applying different posterior estimators (variational, exact, MCMC, etc.) to $F_1^+, F_2^+$, and 2) controlling their cardinalities.
But JJ99 has two main drawbacks that prevent it from fulfilling the scalability and stability requirements in building a web diagnostic bot.

First, Equation \ref{e10} estimates $\Xi^{\min}$ by using the exactly treated disease posterior $P(d^+_i\mid F_2^+,F^-)$. The $\Xi^{\min}$ estimations need be recalculated for every case of $F_1^+\cup F_2^+\cup F^-$ since each case would produce different disease posteriors that affect the gradients in Equation \ref{e9}. 
Second, in order to pass confident posteriors to its variational step, JJ99 basically ``primes'' the potentially inaccurate disease priors with evidences from $F_2^+\cup F^-$. Since the hybridized complexity decreases exponentially w.r.t. to $|F^+_1|$, $F^+_1$ usually contains less evidence than $F_2^+\cup F^-$ in practice (i.e., $|F^+_1|<|F_2^+\cup F^-|$). In other words, JJ99 uses a substantial portion of the evidence in priming the \emph{unaudited} priors first and then refines the posterior probabilities using the smaller leftover portion of evidence.

We propose the variational-first hybridization (VFH) that can fix both issues. Described in Algorithm \ref{algo:vfh}, VFH performs inference on $F_1^+$ first (to prime the unaudited priors) and on $F_2^+$ and $F^-$ later (to refine the posteriors). Calculating $\Xi^{\min}$ in VFH relies on disease priors instead of posteriors. Therefore, the calculation is invariant to the findings that make up $F_1^+\cup F_2^+\cup F^-$. Invariant $\Xi^{\min}$ allows caching $\Xi^{\min}$ values and leads to faster inference as summarized in Table \ref{tab:variational}.

Equation \ref{e10.5} explicitly expresses the joint variational evidence of given findings using VFH:
\small
\begin{equation}\label{e10.5}
\begin{split}
P&_{VFH}(F_1^+, F_2^+,F^-\mid\Xi^{\min}) \\
& =\sum_{F'\in 2^{F_2^+}}\left( -1\right)^{|F'|}\prod_{i=1}^{|D|}\left(\left[\prod_{j=1}^{|F^-\cup F'|}P(f^-_j\mid d^+_i)\right] P(d^+_i\mid F_1^+, \Xi^{\min})+ P(d^-_i\mid F_1^+, \Xi^{\min})\right),\\
\end{split}
\end{equation}
\normalsize
where $2^{F_2^+}$ denotes the power set of $F_2^+$ and the $P(d^+_i\mid F_1^+, \Xi^{\min})$ terms are calculated from
\small
\begin{equation}\label{e7}
\begin{split}
P(F^+\mid \Xi) =e^{- \sum_{j=1}^{|F^+|} f^*(\xi_j)} \prod_{d_i\in \pi(F^+)} \left[e^{\sum_{j=1}^{|F^+|}\xi_j \theta_{ji}} \cdot P(d^+_i)+ P(d^-_i) \right].
\end{split}
\end{equation}
\normalsize

Besides VFH, we can also hybridize the exact evidence $P(F_2^+,F^-)$ and the variational evidence $P(F_1^+\mid \Xi)$ jointly  (JH):
\vspace{-0.1cm}
\tiny
\begin{equation}\label{e11}
\begin{split}
P&_{JH}\left(F_1^+,F_2^+,F^-\mid \Xi^{\min}\right) \\
& =\hspace{-6pt}\sum_{F'\in 2^{F_2^+}}\hspace{-6pt}\left( -1\right)^{|F'|}\prod_{i=1}^{|D|}\hspace{-2pt}\left(\left[\prod_{j=1}^{|F^-\cup F'|}\hspace{-2pt}P(f^-_j\hspace{-2pt}\mid\hspace{-2pt} d^+_i)\right]\hspace{-4pt}\left[\prod_{k=1}^{|F_1^+|}\hspace{-2pt}P(f^+_k\hspace{-2pt}\mid\hspace{-2pt} d^+_k,\xi_{k}^{\min})\right] \hspace{-3pt} P(d^+_i) + \left[\prod_{k=1}^{|F_1^+|}P(f^+_k\mid d^-_k,\xi_{k}^{\min})\right]\hspace{-3pt}P(d^-_i)\hspace{-2pt}\right)\\
& =\left[e^{- \sum_{k=1}^{|F_1^+|} f^*(\xi^{\min}_{k})}\right]\hspace{-3pt}\sum_{F'\in 2^{F_2^+}}\hspace{-6pt}\left( -1\right)^{|F'|}\prod_{i=1}^{|D|}\left(\left[\prod_{j=1}^{|F^-\cup F'|}P(f^-_j\mid d^+_i)\right]\left[ e^{\sum_{k=1}^{|F_1^+|}\xi^{\min}_{k} \theta_{ki}} \right] P(d^+_i)+ P(d^-_i)\right).\\
\end{split}
\end{equation}
\normalsize

Like VFH, JH has the same advantages over JJ99 when $|F^+_1|<|F_2^+\cup F^-|$ .

\begin{algorithm}[t!]
    \SetAlgoLined
    \DontPrintSemicolon 
    \KwIn{$F_1^+$, list of positive findings to be inferred variationally, $F_2^+$, list of positive findings to be inferred exactly, $F^-$, list of negative findings to be inferred exactly, $\theta_{ji}$ for each $f_j\in S$ and $d_i\in D$, $P(d_i)$, disease prior probability for each $d_i\in D$. }
    \KwOut{The joint variational evidence of given findings $F_1^+, F_2^+, $ and $F^-$.}
    Calculate $P(F_1^+\mid \Xi)$ as a function of $\Xi$ from Equation \ref{e7}.\;
    $\Xi^{\min} \leftarrow \arg\min_{\Xi} P(F_1^+\mid \Xi)$ using Newton's method on its derivatives (shown in Equation \ref{e9}).\;
    \For{each $d_i\in D$}{
    Calculate $P(d_i^+ \mid F_1^+, \Xi^{\min})$ from Equation \ref{e7} and $P(F_1^+\mid \Xi)$.\;
    $P(d_i) \gets P(d_i^+\mid F^+_1, \Xi^{\min})$ (update disease priors with posteriors).\;
    }
    $P(F^+_2,F^-)\gets\text{Quickscore}(F^+_2,F^-)$.\;    
    \Return{$P( F^+_2,F^-)$}\;
\caption{the proposed variational-first hybridization (VFH) algorithm.}
\label{algo:vfh}
\end{algorithm}

\subsection{Estimate $\xi^{\min}_{j}$ without disease prior or posterior} 

If we solve $\xi_j^{\min}$ from $\arg\min_{\xi_j} \log P\left(f_j^+\mid \xi_j , \pi(f_j)^+\right)$ instead of $\arg\min_{\xi_j} \log P\left(f_j^+\mid \xi_j\right)$, the resulting $\xi_j^{\min}$ has a closed form solution. To see this, take the equality in Equation \ref{e4}
and let $x_j\equiv\sum_{i=1}^{|\pi(f^+_j)|}\theta_{ji}$. The equality  $e^{f\left( x_j\right)}=e^{\xi_j x_j - f^*\left( \xi_j\right)}$ holds if and only if  $\xi^{\min}_j=\arg\min_{\xi_j}\xi_j x_j - f^*\left( \xi_j\right)$. Simple algebra gives the closed form $\xi^{\min}_j = \left( e^{x_j} - 1\right)^{-1}$. Conceptually, $\arg\min_{\xi_j} \log P\left(f_j^+\mid \xi_j , \pi(f_j)^+\right)$ would surely result in suboptimal $\xi^{\min}_{j}$ due to its lack of prior knowledge. However, we find this approach competitive for a certain range of disease priors (shown in experiments).
The prior/posterior-free (PPF) estimator of $\Xi^{\min}$  is independent of disease prior or posterior and allows $\xi^{\min}_j$ to be pre-computed and cached regardless of JJ99, VFH or JH.

\subsection{$N$-scalability of JJ99, VFH, and JH} 
The ability to process a large number ($N$) of diagnosis with low latency is quintessential for web scalability. The variational step in JJ99+CVX (baseline) is $O(N)$, which would put increasing strain on the server as $N$ grows.  On the other hand, the proposed VFH and JH perform the variational step in constant time w.r.t. $N$. With the proposed PPF estimator of $\Xi^{\min}$, all hybridization schemes can execute variational transformation in constant time w.r.t. $N$.
 Table \ref{tab:variational} summarizes the practical efficiency of the proposed variational hybridization when used with either CVX or PPF estimator of $\Xi^{\min}$. The $\log\log {1\over \epsilon}$ term is the optimization cost using second order algorithms like Newton's method. Note that Table \ref{tab:variational} only compares the cost of the variational step. We evaluate the overall inference cost for different inferencers in the Experiments section.

\begin{table}[ht!]
\centering
\caption{Detailed temporal complexities for the proposed variational parameter estimation in terms of $|D|$, $|F_1^+|$, $|F^-|$, $\epsilon$, $|S|$, and $N$. All entries are Big-$O$ complexity. Note that although JH is equivalent to VFH in variational parameter estimation, JH will have higher overall inference complexity due to difference between Equation \ref{e10.5} and \ref{e11}.}
\label{tab:variational}
\resizebox{13.5cm}{!} {\renewcommand{\arraystretch}{1.1}
\begin{tabular}{  c | c | c | c | c }

$\Xi^{\min}$ solver& \# of queries& JJ99 & VFH & JH \\
\hline
 & 1& \small  $|D|\cdot|F_1^+|\log\log {1\over \epsilon}$ & 
\small  $|D|\cdot|F_1^+|\log\log {1\over \epsilon}$ & 
\small  $|D|\cdot|F_1^+|\log\log {1\over \epsilon}$\\
 CVX& $N$& \small  $N\cdot|D|\cdot|F_1^+|\log\log {1\over \epsilon}$ & 
\small  $|D|\cdot|S|\log\log {1\over \epsilon}$ & 
 \small  $|D|\cdot|S|\log\log {1\over \epsilon}$\\
\hline 
 & 1& \small  $|D|\cdot|F_1^+|$ & 
\small  $|D|\cdot|F_1^+|$ & 
\small  $|D|\cdot|F_1^+|$\\
 PPF& $N$& \small  $N\cdot|D|\cdot|F_1^+|$ & 
\small  $|D|\cdot|S|$ & 
 \small  $|D|\cdot|S|$\\
\end{tabular}
}
\end{table}

\section{Variational transformation with uncertain disease priors}
In addition to the inference formula (JJ99, VFH, or JH) and the $\Xi^{\min}$ solver (CVX or PPF), there is a third component in variational inference that is critical to the posterior accuracy: the transformation ranking algorithm that partitions $F^+$ into $F^+_1$ and $F^+_2$, given fixed $|F^+_1|$.

\citep{jj99} and \citep{ng00} use a simple greedy heuristic ordering (GDO) algorithm to rank the order of transformation based on the greedy local optimum for further minimizing the overall variational upper bound (which is firstly minimized by setting $\Xi=\Xi^{\min}$). Minimizing the overall variational upper bound is, naturally, a commendable goal. But given the inaccuracy in widely-ranged disease priors, is there an ordering algorithm that can fender off that uncertainty more effectively than GDO?

To simplify the discussion, we assume uniform $\theta_{ji}=c$ for any $j,i$ pair such that $P(f^-_j\mid d^+_i)<1$, where $c\in(0,+\infty)$. 
Let the random variable (r.v.) $\mathcal{P}={1\over m }\sum_{k=1}^m \mathcal{U}_k$, where $\mathcal{U}_k \sim \iid \Uni(0,{2\over 1+p})$ for $k=1,2,\ldots,m$.  
We further assume that the inverse disease prior odds: $P\left(d_i^-\right)/P\left(d_i^+\right) = p_i$ for any $i\in\{1,\ldots,|D|\}$ are drawn independently from $\mathcal{P}$. 
The choice of $m$ is rather inconsequential in our discussion. For a reasonable $m$ (e.g., $5<m<1,000$), the uniform mean distribution $\mathcal{P}$ introduces Gaussian-like variance without breaking the positive definite constraint on $p_i$'s.

We desire to establish an ordering algorithm that minimizes the variance in posterior predictions due to $\mathcal{P}$. The first step is to show its existence.
Formally, it is stated and proved in Proposition \ref{p2}.

\begin{prop}\label{p2}
Fix $p\in[0,+\infty)$, $c\in(0,+\infty)$, and $n\in\{1,2,\ldots,|F^+|\}$.  Then there exists a $F_1^+\subset F^+$ such that $|F_1^+|=n$ and $\Var\left[\log P\left(d_i\mid F^+_1,\mathcal{P},\Xi^{\min}\right)\cdot \mathcal{P}\right]$ is approximately minimized for every $d_i\in D$.  
\end{prop}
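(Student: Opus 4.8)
The plan is to collapse the proposition to a one-dimensional question about a single deterministic scalar $a_i$ attached to each disease, show that the target variance is a strictly decreasing function of that scalar, and then establish existence by a ranking argument in which the word ``approximately'' absorbs both a delta-method error and a multi-objective slack. \textbf{Step 1 (make the randomness explicit).} Using the prior/posterior-free estimator, $\xi_j^{\min}=(e^{x_j}-1)^{-1}$ with $x_j=c\,|\pi(f_j)|$ depends only on the network structure, not on the priors, so it is deterministic. Because the variational joint in Equation \ref{e7} factorizes over $D$, its per-disease factor yields $P(d_i^+\mid F_1^+,\Xi^{\min})=e^{a_i}/(e^{a_i}+p_i)$, where $a_i:=\sum_{j\in F_1^+}\xi_j^{\min}\theta_{ji}=c\sum_{j\in F_1^+}\xi_j^{\min}\,\mathbbm{1}[d_i\in\pi(f_j)]$. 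Under the uniform-$\theta$ assumption each $a_i$ is thus a deterministic, nonnegative, \emph{additive} set function of $F_1^+$, and the only randomness enters through $p_i=\mathcal{P}$.

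\textbf{Step 2 (reduce the variance to a function of $a_i$).} Writing the target as $\Var[Y_i]$ with $Y_i:=\mathcal{P}\cdot\log P(d_i^+\mid F_1^+,\mathcal{P},\Xi^{\min})=\mathcal{P}\bigl(a_i-\log(e^{a_i}+\mathcal{P})\bigr)=h_i(\mathcal{P})$, I would Taylor-expand $h_i$ about $\mu:=\mathbb{E}[\mathcal{P}]=1/(1+p)$ and apply the delta method, giving $\Var[Y_i]\approx\bigl(h_i'(\mu)\bigr)^2\Var[\mathcal{P}]$ with $\Var[\mathcal{P}]=\tfrac{1}{3m(1+p)^2}$. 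This is precisely the approximation licensed by ``approximately,'' and it is accurate because for moderate $m$ the mean $\mathcal{P}$ concentrates tightly about $\mu$. A short computation gives $h_i'(\mu)=a_i-\log(e^{a_i}+\mu)-\mu/(e^{a_i}+\mu)$, which is strictly negative, strictly increasing in $a_i$, and tends to $0$ as $a_i\to\infty$; hence $\bigl(h_i'(\mu)\bigr)^2$, and therefore the approximate $\Var[Y_i]$, is a strictly \emph{decreasing} function of $a_i$ alone.

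\textbf{Step 3 (existence via a common ranking).} By Step 2, minimizing $\Var[Y_i]$ for a fixed disease is equivalent to maximizing $a_i$ over $n$-subsets $F_1^+$. The remaining task is simultaneity: one $F_1^+$ must (approximately) maximize all $|D|$ scalars $a_i$ at once. Here I would use that $a_i$ is the common weight mass $\sum_{j\in F_1^+}\xi_j^{\min}$ restricted to the neighborhood $\pi(\cdot)$ of $d_i$. Ranking findings by the single scalar $\xi_j^{\min}$ (equivalently, by increasing $|\pi(f_j)|$) and taking the top $n$ maximizes that mass globally, hence maximizes every disease's restriction of it up to the terms contributed by findings not connected to $d_i$. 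Since the collection of $n$-subsets is finite, this top-$n$ set exists and serves as the claimed approximate simultaneous minimizer.

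\textbf{Main obstacle.} The delicate step is the simultaneity in Step 3: the per-disease objectives $a_i$ form a genuine multi-objective problem and need not share an \emph{exact} maximizer. I would make ``approximately'' quantitative by bounding $\max_i$ of the gap between the top-$n$-by-$\xi^{\min}$ value of $a_i$ and its true per-disease optimum, and arguing that this gap is of the same or lower order as the delta-method remainder, so that no competing partition lowers every disease's variance by more than a negligible amount. Uniform control of the Taylor remainder over $i$ and verification that $\bigl(h_i'(\mu)\bigr)^2$ is monotone in $a_i$ are the routine supporting calculations.
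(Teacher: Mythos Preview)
Your plan diverges from the paper's proof in two substantive ways, and the second is a genuine gap.

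\emph{Parsing of the target.} The paper reads the quantity inside the variance as $\log\mathcal{Q}_i$ with $\mathcal{Q}_i:=P(d_i\mid F_1^+,\mathcal{P},\Xi^{\min})\cdot\mathcal{P}$, i.e.\ the logarithm is applied to the whole product; it then writes $\mathcal{Q}_i=\gamma/\bigl((\gamma-1)\mathcal{P}+1\bigr)$ and obtains, via the same delta-method step you use, $\Var[\log\mathcal{Q}_i]\approx\tfrac{1}{3m}\bigl(1+\tfrac{1+p}{\gamma-1}\bigr)^{-2}$, which is \emph{increasing} in $\gamma$. You instead compute $\Var[\mathcal{P}\cdot\log P(\ldots)]$, a different random variable, and find $(h_i'(\mu))^2$ \emph{decreasing} in $a_i$. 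The two calculations are not inconsistent---they are about different objects---but your monotonicity conclusion points you toward \emph{maximizing} $a_i$, whereas the paper's construction is to \emph{minimize} $\gamma$ by choosing the $n$ smallest values of $\xi_j^{\min}\mathbf{E}[\mathbf{1}_{ji}]$.

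\emph{The missing device.} The paper does not treat the indicators $\mathbf{1}[d_i\in\pi(f_j)]$ as fixed. It models them as random variables $\mathbf{1}_{ji}$ and replaces $e^{a_i}$ by its expectation $\gamma=\mathbf{E}\bigl[\exp(c\sum_j\xi_j^{\min}\mathbf{1}_{ji})\bigr]$, which is the \emph{same scalar for every disease}. That single step is what collapses the multi-objective problem into a one-dimensional one and makes the ``for every $d_i\in D$'' clause automatic. Your Step~3 tries to achieve simultaneity without this device, and it does not go through: taking the top $n$ findings by $\xi_j^{\min}$ means selecting findings with the \emph{smallest} neighbourhoods $|\pi(f_j)|$, so for a typical fixed disease $d_i$ it is quite likely that none of the chosen findings is connected to $d_i$, yielding $a_i=0$. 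By your own Step~2, $a_i=0$ is the \emph{worst} point for the variance, not an approximate minimizer. The per-disease gap you propose to bound in your ``Main obstacle'' paragraph is therefore not lower order than the delta-method remainder; it can be as large as the entire objective. Randomizing the connectivity and passing to $\gamma$ is exactly the idea that removes this obstacle, and it is what your plan is missing.
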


\begin{proof}
Let the r.v. $\mathcal{Q}_i$ denote $P\left(d_i\mid F^+_1,\mathcal{P},\Xi^{\min}\right)\cdot \mathcal{P}$. And let $\gamma > 1$ denote the expected value of $\exp\left[{c\sum_{j=1}^{|F^+_1|}{\xi^{\min}_{j} \mathbf{1}_{ji}} }\right]$, where the r.v. $\mathbf{1}_{ji}$ models the likelihood of whether $d_i\in\pi(f_j)$. Now we can express $\mathcal{Q}_i$ as $\mathcal{Q}_i = {\gamma\over \gamma \mathcal{P} + 1-\mathcal{P}}$ and reduce $\Var\left[\log\mathcal{Q}_i\right]$ to simple functions of $\mathbf{E}\left[\mathcal{P}\right]$ and $\Var\left[\mathcal{P}\right]$, which are known quantities of the uniform mean (Bates) distribution.
\small
\begin{equation}\label{pe1}
\begin{split}
&\Var\left[\log\mathcal{Q}_i\right] = \Var\left[\log {\gamma \over (\gamma-1)\mathcal{P}+1} \right] = \Var\left[\log \left((\gamma-1)\mathcal{P}+1\right) \right]\approx {\Var\left[(\gamma-1)\mathcal{P}+1 \right]\over \left(\mathbf{E}\left[(\gamma-1)\mathcal{P}+1\right]\right)^2},\text{where}\\
&{\Var\left[(\gamma-1)\mathcal{P}+1 \right]\over \left(\mathbf{E}\left[(\gamma-1)\mathcal{P}+1\right]\right)^2}
=  {(\gamma-1)^2 \Var\left[\mathcal{P}\right]\over \left[(\gamma-1) \mathbf{E}\left[\mathcal{P}\right] +1\right]^2} = {1\over 12n}\left({2{\gamma-1\over 1+p}\over {\gamma-1\over 1+p} + 1}\right)^2 = {1\over 3n}\left({1\over 1+ {1+p\over \gamma -1}}\right)^2.\\
\end{split}
\end{equation}
\normalsize
The ``$\approx$'' in Equation \ref{pe1} is the result of Taylor series expansion on $\log \left((\gamma-1)\mathcal{P}+1\right)$, a common resort to approximate the moments of a ($\log$-)transformed random variable~\citep{vdv}.
Approximately, $\Var\left[\log\mathcal{Q}_i\right] \propto\gamma$. Observe that, for fixed $n$, choosing the $n$ smallest $\xi^{\min}_{j} \mathbf{E}\left[ \mathbf{1}_{ji}\right]$'s will guarantee the smallest $\gamma$. We show the existence of $F_1^+\subset F^+$ by the following construction: consecutively selecting the $f_j^+$'s associated with the $n$ smallest $\xi^{\min}_{j}\mathbf{E}\left[ \mathbf{1}_{ji}\right]$'s. 
\end{proof}

Proposition \ref{p2} states the existence and the construction of $F_1^+\subset F^+$ for each $n$. However, the construction of  $F_1^+$ involves calculating $\gamma$ for each $\mathcal{Q}_i$ and $\xi^{\min}_j$ for all $f_j\in F^+$, which makes the ordering algorithm slower than the actual variational transformation (so is GDO). 

Now we show how to simplify the construction algorithm of $F_1^+$ to FDO without calculating $\gamma$'s  or $\xi_j^{\min}$'s.
For a wide range of practical parameter settings we are interested in (e.g., Figure \ref{fig:gamma} subfigures), we notice that $\gamma$ is empirically $\propto \left(\sum_{j=1}^{|F^+_1|} \xi_j^{\min}\right)^{-1}$. The exact analysis of this claim may be transcendent but $\lim_{\xi_j^{\min}\to\infty}\mathbf{E}\left[|\pi\left(f^+_j\right)|\right]\xi_j^{\min}=0$, suggesting that $\gamma$ eventually approaches minimum when $F^+_1$ is made of $f^+$'s that have the largest $\xi^{\min}$'s. Proposition \ref{p1} shows that $\mathbf{E}\left[|\pi\left(f_j^+\right)|\right]\propto {1\over \xi_j^{\min}}$ for any fixed $p\in[0,+\infty)$, $c\in(0,+\infty)$. As a result, we have $\sum_{j=1}^{|F^+_1|}\mathbf{E}\left[|\pi\left(f_j^+\right)|\right]\propto\gamma$. In other words, compose $F^+_1$ with the $f^+_j$'s that have the smallest $|\pi\left(f_j^+\right)|$ yields the minimal $\gamma$.

\begin{prop}\label{p1}
Fix $p\in[0,+\infty)$, $c\in(0,+\infty)$. Then for any $j\in\left\{j|\pi\left(f^+_j\right)\ne\emptyset\right\}$, its variational parameter $\xi^{\min}_j$ decreases monotonically on $(0,+\infty)$ as $\mathbf{E}\left[|\pi\left(f^+_j\right)|\right]$ increases.
\end{prop}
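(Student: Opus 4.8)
The plan is to collapse the proposition to the monotonicity of a single-variable function. The first move is to exploit the uniform assumption $\theta_{ji}=c$: for any $f_j$ with $\pi(f_j^+)\ne\emptyset$, the only datum that enters the variational parameter is $x_j\equiv\sum_i\theta_{ji}=c\,|\pi(f_j^+)|$, so $\xi_j^{\min}$ depends on $\pi(f_j^+)$ solely through its cardinality $n\equiv|\pi(f_j^+)|$. I would therefore write $\xi_j^{\min}=g(n)$, extend $g$ to the continuous variable $n\in(0,+\infty)$, and identify this parameter with $\mathbf{E}[|\pi(f_j^+)|]$ (legitimate because under uniform $\theta$ the expectation merely rescales the single degree of freedom $n$). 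The claim then reduces to showing $g$ is strictly decreasing on $(0,+\infty)$.

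For the PPF estimator this is immediate: the closed form derived just above gives $g(n)=(e^{cn}-1)^{-1}$, and differentiation yields $g'(n)=-c\,e^{cn}/(e^{cn}-1)^2<0$ for every $n>0$ and $c>0$. Since this formula contains no $p$, the conclusion holds for each fixed $p\in[0,+\infty)$, which is exactly why the hypothesis merely fixes $p$ rather than using it.

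For the CVX estimator, where $\xi_j^{\min}$ does feel the prior through $p$, I would argue implicitly. Under uniform $\theta_{ji}=c$ and uniform $p_i=p$, the first-order condition from Equation \ref{e9} reads $H(\xi,n)\equiv\log\frac{\xi}{1+\xi}+\frac{cn}{p\,e^{-c\xi}+1}=0$. Applying the implicit function theorem to $H(\xi_j^{\min},n)=0$ gives $\frac{d\xi_j^{\min}}{dn}=-\,\partial_n H/\partial_\xi H$. Here $\partial_n H=\frac{c}{p\,e^{-c\xi}+1}>0$, while $\partial_\xi H=\partial_\xi^2\log P(f_j^+\mid\xi_j)>0$ at the interior minimizer by the $\xi$-convexity invoked for the CVX step (one can also see this directly, since both $\log\frac{\xi}{1+\xi}$ and the second term of $H$ are increasing in $\xi$). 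The ratio is therefore strictly negative, giving the same monotonic decrease.

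The main obstacle is conceptual rather than computational: justifying the reduction from the random cardinality $|\pi(f_j^+)|$ and its mean to a single deterministic scalar on which $\xi_j^{\min}$ depends. I would also need to secure the regularity behind the implicit-function step in the CVX route, namely that the minimizer is interior so strict convexity forces $\partial_\xi H>0$, and to invoke the hypothesis $\pi(f_j^+)\ne\emptyset$ to keep $n\ge 1>0$, which keeps $g$ and the factor $e^{cn}-1$ well-defined.
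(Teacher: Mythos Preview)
Your proof is correct and follows the same approach as the paper: both set the first-order condition from Equation~\ref{e9} to zero under the uniform-$\theta$, uniform-$p$ simplification and then establish monotonicity by checking a derivative sign. The only cosmetic differences are that the paper inverts the condition to write $\mathbf{E}\bigl[|\pi(f_j^+)|\bigr]$ explicitly as a function of $\xi_j^{\min}$ and shows $d\mathbf{E}\bigl[|\pi|\bigr]/d\xi_j^{\min}<0$ by direct calculation (unifying PPF and CVX by treating PPF as the special case $p=0$), whereas you handle PPF via the closed form and CVX via the implicit function theorem with the convexity shortcut for $\partial_\xi H>0$.
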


\begin{proof}
$\xi^{\min}_j$ can be solved from either $\arg\min_{\xi_j} P\left(f_j^+\mid \xi_{j},\pi(f_j)^+\right)$ or $\arg\min_{\xi_j} P\left(f_j^+\mid \xi_{j}\right)$. Since $\xi^{\min}_j=\arg\min_{\xi_j} P\left(f_j^+\mid \xi_{j},\pi(f_j)^+\right)$ can be seen as the special case when $p=0$, our argument below applies to both cases.

For fixed $p,c$, we can solve for $\xi^{\min}_j$ by letting ${\partial\over \partial \xi_j}\log P(f_j^+\mid \xi_j) = 0$. We have $\mathbf{E}\left[|\pi\left(f^+_j\right)|\right] = {1\over c}\log\left(1+{1\over \xi^{\min}_j}\right)\left(p e^ce^{-\xi^{\min}_j} + 1\right)$. Taking derivative of $\mathbf{E}\left[|\pi\left(f^+_j\right)|\right]$ w.r.t. $\xi^{\min}_j$ gives:
\small
\begin{equation}
\begin{split}
{d\mathbf{E}\left[|\pi\left(f^+_j\right)|\right]\over d\xi^{\min}_j} &= - {e^{\xi^{\min}_j}  + pe^c\left[1+\xi^{\min}_j\left(\xi^{\min}_j+1\right)\log(1+{1\over \xi^{\min}_j})\right]\over ce^{\xi^{\min}_j}\xi^{\min}_j(\xi^{\min}_j+1)}<0 \text{, for } \xi^{\min}_j >0.
\end{split}
\end{equation}
\normalsize
\end{proof}

Since the same strategy minimizes $\Var\left[\log\mathcal{Q}_i\right]$ for every $d_i\in D$, it must be the most stable globally as well. 
Therefore, we arrive at an extremely simple variational transformation algorithm: sort $f_j^+\in F^+$ by ascending rank of $\pi\left(f_j^+\right)$ and let that order be the order of variational transformation. We refer to this strategy as finding-degree order (FDO).

\begin{figure}
\centering
\subfigure[]{
  \includegraphics[width=3.3cm]{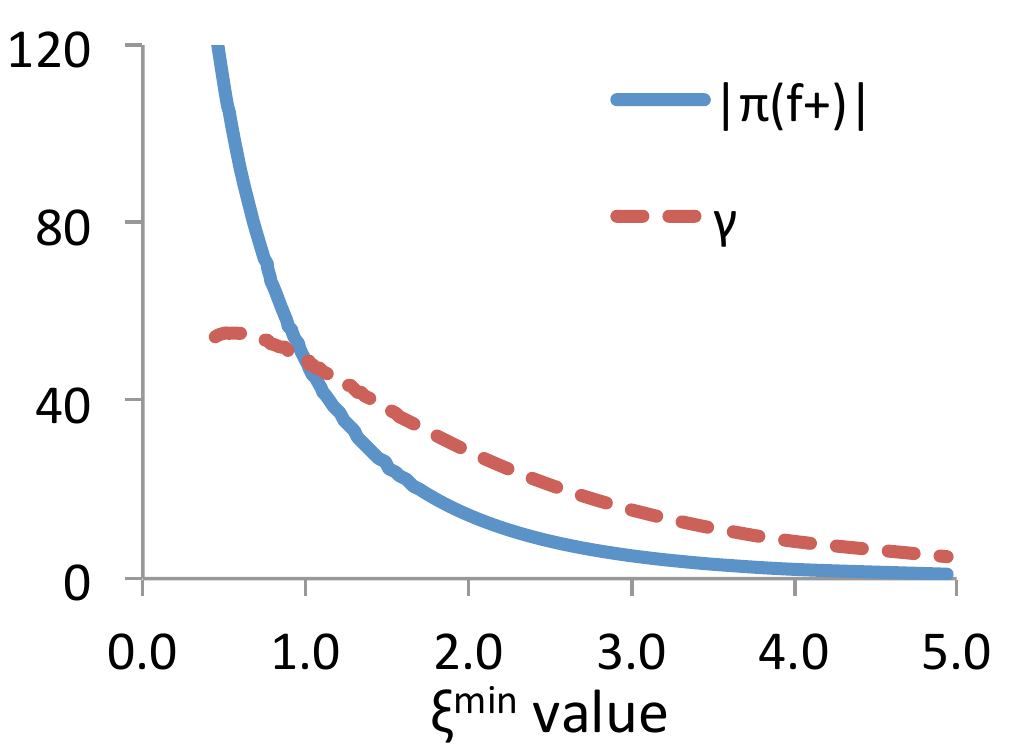}
}\hspace{-0.5mm}%
\subfigure[]{
   \includegraphics[width=3.3cm]{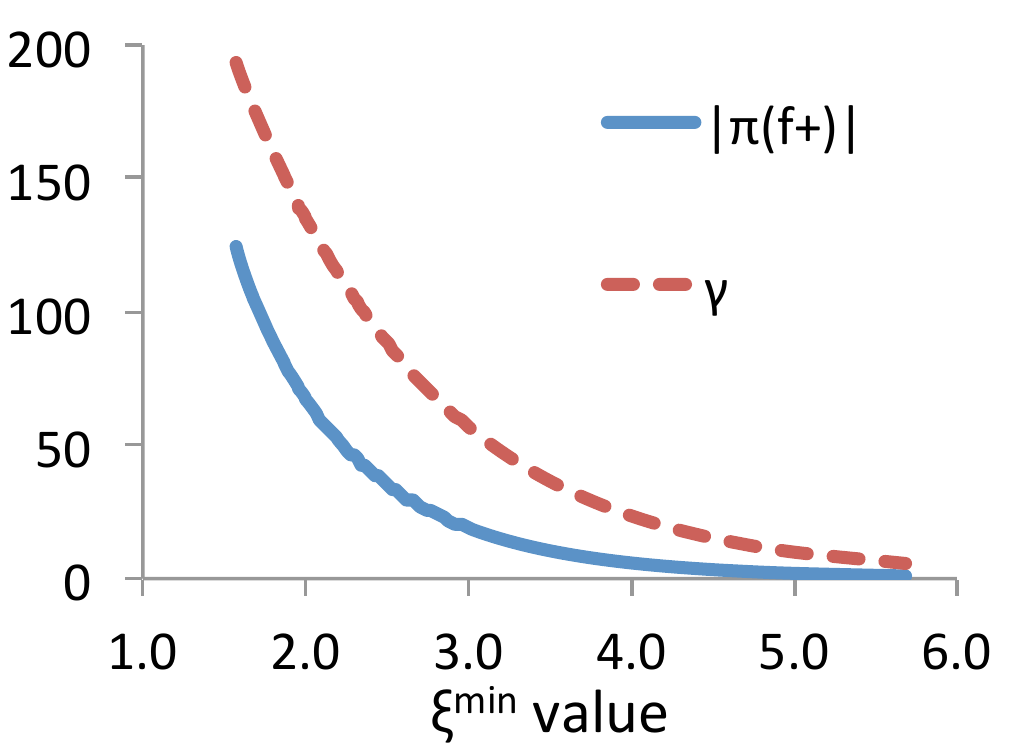}
}\hspace{-0.5mm}%
\subfigure[]{
   \includegraphics[width=3.3cm]{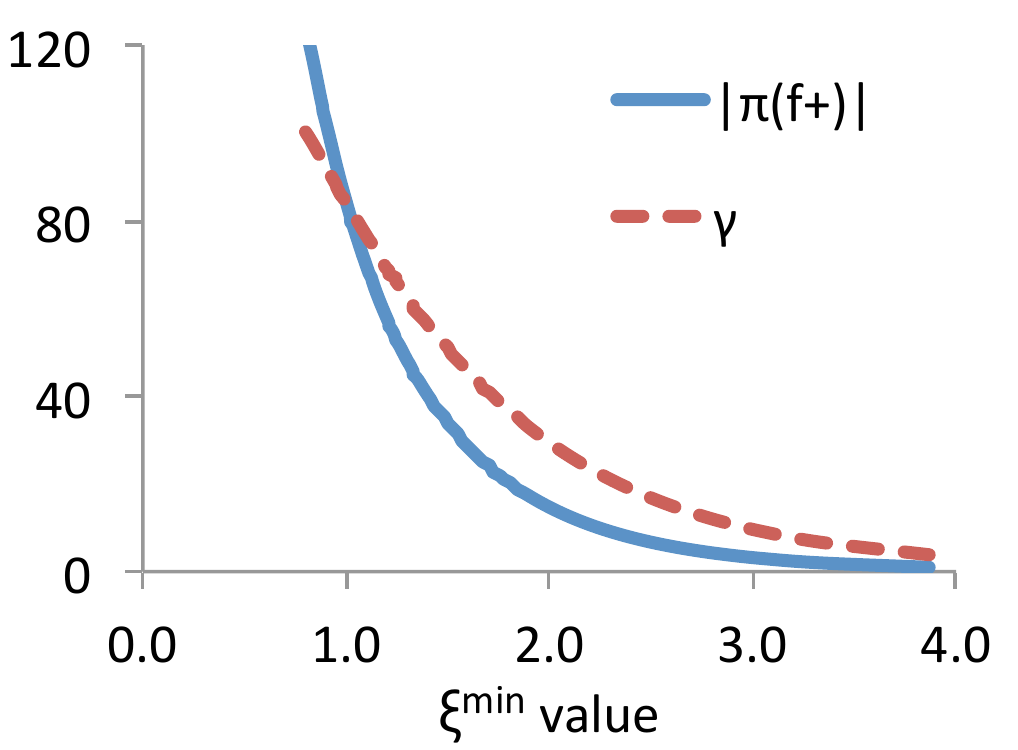}
}\hspace{-0.5mm}%
\subfigure[]{
   \includegraphics[width=3.3cm]{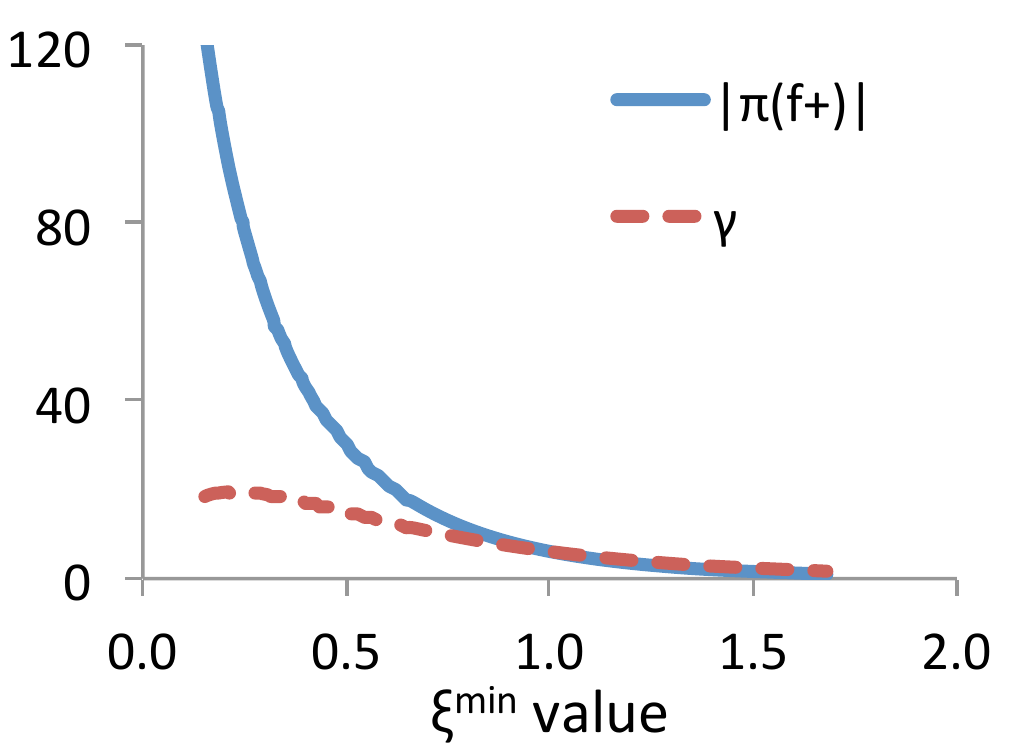}
}
\vspace{-0.5cm}
\caption{{\color{blue}Blue:} $\mathbf{E}\left[|\pi\left(f^+\right)|\right]$ ($y$-axis) vs. $\xi^{\min}$ ($x$-axis) on synthesized experimental data. \\
   {\color{red}Red:}  $\gamma$ ($y$-axis) vs. the  average of $\xi^{\min}$'s correspond to that $\gamma$ ($x$-axis).  \\(a) $p={1-0.01\over0.01}$, $c=-\log \left(1-0.5\right)$. (b) $p={1-0.001\over0.001}$, $c=-\log \left(1-0.6\right)$. (c) $p={1-0.002\over0.002}$, $c=-\log \left(1-0.7\right)$. (d) $p={1-0.005\over0.005}$, $c=-\log \left(1-0.9\right)$.}
\label{fig:gamma}
\end{figure}

\section{Related work}
Exact inference on NOBN is fundamentally intractable~\citep{cooper90}.
Brute force inference on NOBN is $O(|F|\cdot2^{|D|})$ as it calculates $P(F)$ by summing up $P(F\mid D')\cdot P(D')$, where $D'$ can be the combination of the presence or the absence of any subsets of $D$. Junction tree algorithms~\citep{pearl88} can be more efficient in practice at $O(2^{|M|})$, where $|M|$ is the maximal clique size of the moralized network. 

Quickscore~\citep{qs} reduces the temporal complexity to some exponential function of a quantity substantially smaller than $|D|$ or $|M|$ and make the inference practical for common usage. Quickscore~\citep{qs} achieves  $\widetilde{O}\left(|D|\cdot 2^{|F|}\right)$ by exploiting marginal and conditional independence\footnote{the soft-$O$ bound is derived from $O\left(|D|\cdot |F^-| \cdot 2^{|F^+|}\right)$ given in \citep{qs}.}.

\begin{table}[ht!]
\centering
\caption{Overall temporal complexities for exact and variational inferences on NOBN in terms of $|D|$, $|M|$, $|F|$, and $|F'|$ (note that all results are independent of $|S|$). In practical applications like \texttt{QMR-DT}, $|D|=534$, $|M|\approx 151$, and $|F|\approx 43$~\citep{j99,jj99}.}
\label{tab:nobn}
\vspace{-0.2cm}
{\setlength{\extrarowheight}{6pt}%
\begin{tabular}{ c | c | c | c }
Brute force & Junction tree & Quickscore & Variational \\ 
\hline
\small  $O\left(|F|\cdot2^{|D|}\right)$ & \small $O\left(|D|\cdot2^{|M|}\right)$ & \small $\widetilde{O}\left(|D|\cdot 2^{|F|}\right)$ &\small $\widetilde{O}\left(|D|\cdot \left[|F'| +2^{|F-F'|}\right]\right)$ \\
\end{tabular}
}
\end{table}

Various approximate inference methods are proposed in place of Quickscore when processing expensive inference cases in NOBN (particularly \texttt{QMR-DT}).
Variational inference for NOBN developed in \citep{jj99} reduces the cost in computing $P(F)$ by applying variational transformation to a subset of $F'\subset F$. The variational evidence is incorporated as posterior probability when performing quickscore on the remaining findings. The running time is then \small$\widetilde{O}\left(|D|\cdot \left[|F'| +2^{|F-F'|}\right]\right)$\normalsize.

Other general approximation methods that can be applied to NOBN include loopy belief propagation~\citep{loopy}, mean field approximation~\citep{ng00}, and importance sampling based sampling methods~\citep{uai10}.
Some have also considered processing each finding in $F$ sequentially~\citep{pami13}, which is arguably more similar to the style of a realistic patient-to-doctor diagnosis.

\section{Experiments}

We evaluate the proposed inference algorithms on a real-world symptom-disease NOBN called \texttt{F120}.
\texttt{F120} is a QMR-like medical NOBN constructed from multiple reliable medical knowledge sources and is amended by medical experts. Unlike \texttt{QMR-DT}, \texttt{F120} focuses on symptoms and diseases related to maternal and infant care. Due to the anonymous submission, the authors refrain from discussing \texttt{F120}'s details other than listing its vital statistics in Table \ref{tab:data}.

Due to the unavailability of the proprietary \texttt{QMR-DT} network~\citep{uai06}, an anonymized version (\texttt{aQMR}) is available~\citep{uai13}. However, \texttt{aQMR} anonymizes the symptom and disease node names and randomizes \texttt{QMR-DT}'s $P(f^+\mid d^+)$ probabilities. With the medical connotation removed, it is difficult to confidently generate user queries (a user query is a tuple $\langle F^+ , F^- , d_l\rangle$, where $d_l$ is the label disease: the most likely disease given the symptoms according to medical experts). Previous works working with \texttt{aQMR} do not face this issue since they do not require use-cases. For example, \citep{uai13,nips13} focus on recovering the network structure and parameters; \citep{uai10} focuses on the inference time and the \emph{relative} divergence between approximate inference outcome and the exact inference outcome.

We also evaluate the algorithm's scalability on the artificially generated \texttt{S1} that is much larger in scale than \texttt{F120} and \texttt{QMR-DT}. \texttt{S1} has 40,000 hidden disease nodes, which is approximately the total number of diseases in ICD-10 classification. Figure \ref{fig:speed} compares various inference algorithms against the baseline in \citep{jj99} (JJ99+CVX). The proposed variational-first hybridization (VFH) is consistently faster than other methods. Despite having the same variational cost as VFH (shown in Table \ref{tab:variational}), Joint hybridization (JH) is the slowest due to its repeated negative evidence computation of Equation \ref{e11}. JJ99+PPF is significantly faster than JJ99+CVX due to the simplified $\Xi^{\min}$ estimation.

Figure \ref{fig:accuracy} compares the inference accuracies on \texttt{F120}. To simulate the wide-ranged inaccuracy in the disease priors $P(d^+)$'s, we scramble them with samples drawn from the uniform mean (Bates) distribution $\mathcal{P}$ at different ${1\over 1+p}$ values. In total, we test four sets of queries with different kinds of false positive findings. Each query in the 1st set (\texttt{random20}) contains 20\% random $f^+$'s that are not caused by the labeled disease. For the 2nd set (\texttt{chronic20}), the 20\% false $f^+$'s are symptoms caused by some common chronic diseases (e.g., asthma, hypertension). Chronic symptoms are often mentioned inadvertently by patients during doctor's visit and making the diagnosis harder. The 3rd set (\texttt{chronic40}) has the same type of false $f^+$'s as \texttt{chronic20} but the ratio is 40\% of $F^+$. For the 4th set (\texttt{confuse20}), the 20\% false $f^+$'s are symptoms caused by diseases similar to the labeled disease (e.g., influenza and common cold). Such diseases share several symptoms, but often the severity and other key symptoms are decisive in telling them apart. Each of the four sets has 800 queries and each query consists of on average eight $f^+$'s and four $f^-$'s. Shown in Figure \ref{fig:accuracy}, VFH+CVX+FDO performs better than the JJ99+CVX+GDO baseline across the wide range of $P(d^+)$ and even outperforms the exact Quickscore for certain $P(d^+)$ values. VFH+PPF+FDO suffers from its suboptimal (although fast, shown in Figure \ref{fig:speed}) $\xi^{\min}$ estimations. VFH+PPF+FDO is comparable to JJ99+CVX+JJ99 at the lower range of $P(d^+)$ values. Lastly, JH+CVX+FDO has the closest performance portfolio to that of Quickscore and is quite competitive.

\begin{figure}[ht!]
\begin{floatrow}
\capbtabbox{%
\setlength{\extrarowheight}{8pt}%
\begin{tabular}{  l | l | l | l | l }
\textbf{NOBN} & $|D|$ & $|S|$ & \tiny$|\{P(f^+\mid d^+) >0\}|$\normalsize & \textbf{Density}\\ 
     \hline 
\texttt{F120}             & 665 & 1,276 & 10,552 & 1.24\%\\
\texttt{QMR-DT}       & 534 & 4,040 & 40,740 & 1.89\%\\
\texttt{S1}                 &40,000 & 12,000 & 384 million & 80.0\%\\
\end{tabular}
}{%
\captionsetup{position=above}
  \caption{Comparisons of NOBNs on the network size and density (measured as total number of nonzero $P(f^+\mid d^+)$ as a percentage of $|D|\cdot |F|$).}\label{tab:data}%
}\hspace{-1.2cm}
\ffigbox{%
  \includegraphics[width=4.6cm]{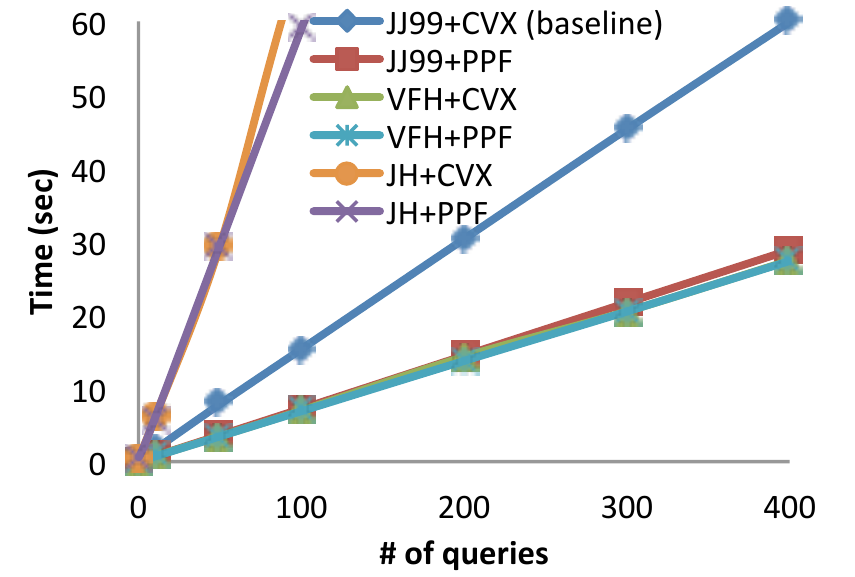}
}{%
\captionsetup{justification=centering}
  \caption{Runtime comparisons \\ of different algorithms \\ on the \texttt{S1} network.}\label{fig:speed}%
}
\end{floatrow}
\end{figure}

\begin{figure}[ht!]
\centering
\subfigure[]{
  \includegraphics[width=3.65cm]{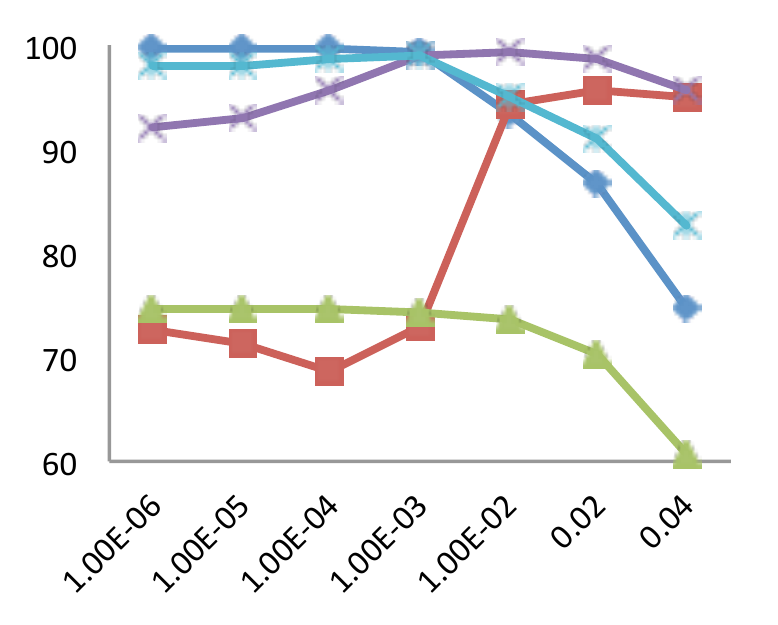}
}\hspace{-0.5cm}%
\subfigure[]{
   \includegraphics[width=3.65cm]{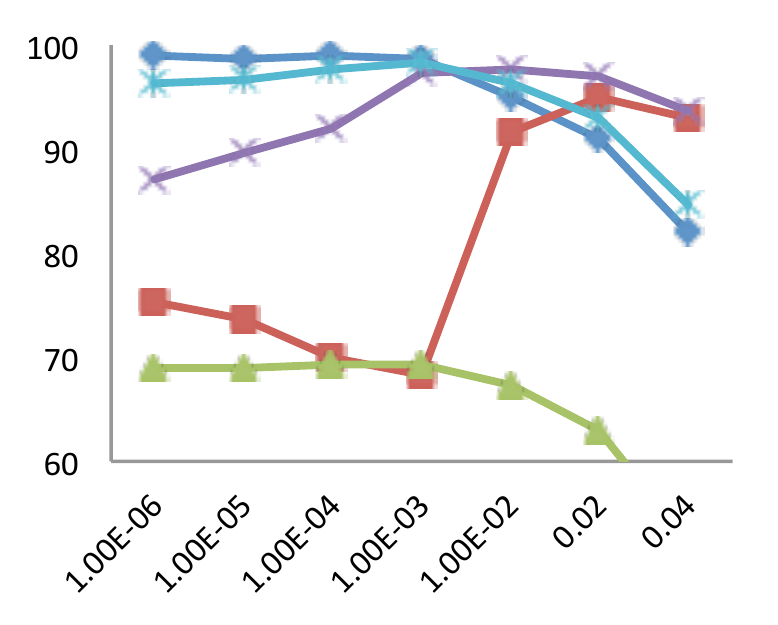}
}\hspace{-0.5cm}%
\subfigure[]{
   \includegraphics[width=3.65cm]{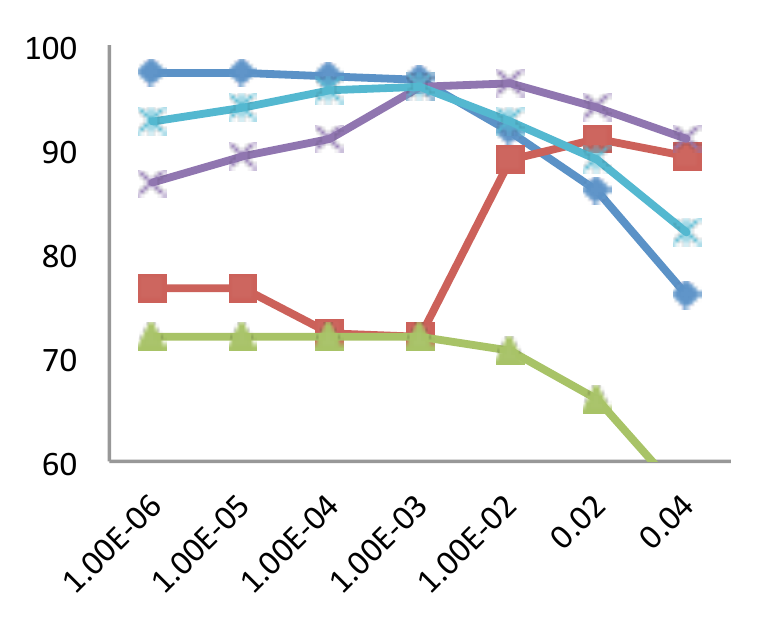}
}\hspace{-0.5cm}%
\subfigure[]{
   \includegraphics[width=3.65cm]{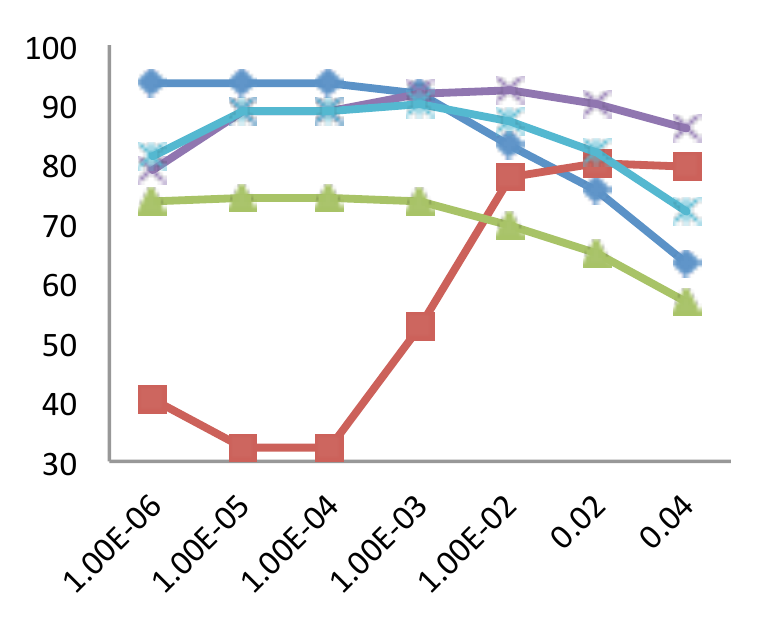}
}\\[-0.4cm]
\subfigure[]{
  \includegraphics[width=3.65cm]{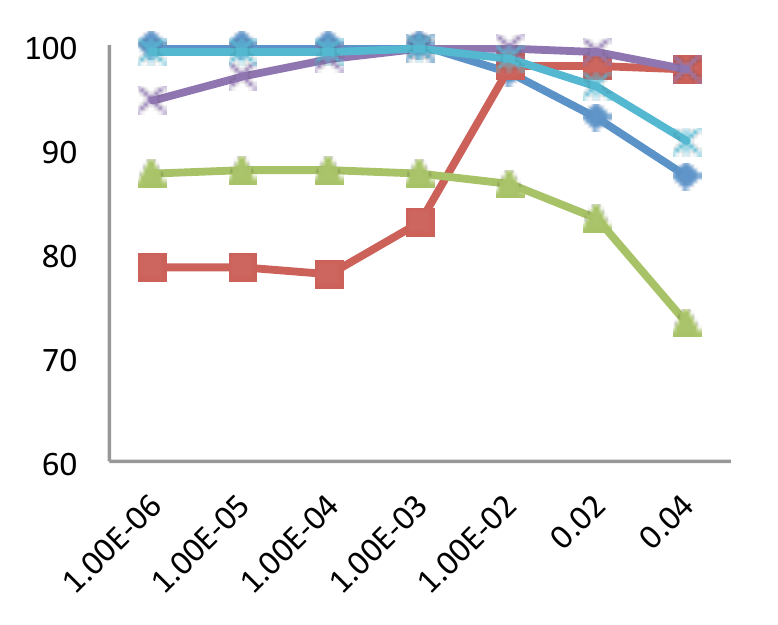}
}\hspace{-0.5cm}%
\subfigure[]{
   \includegraphics[width=3.65cm]{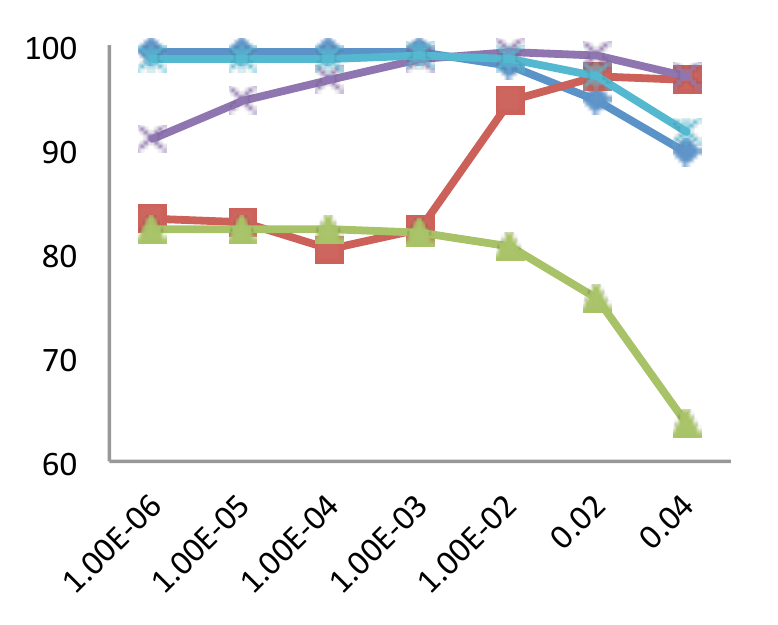}
}\hspace{-0.5cm}%
\subfigure[]{
   \includegraphics[width=3.65cm]{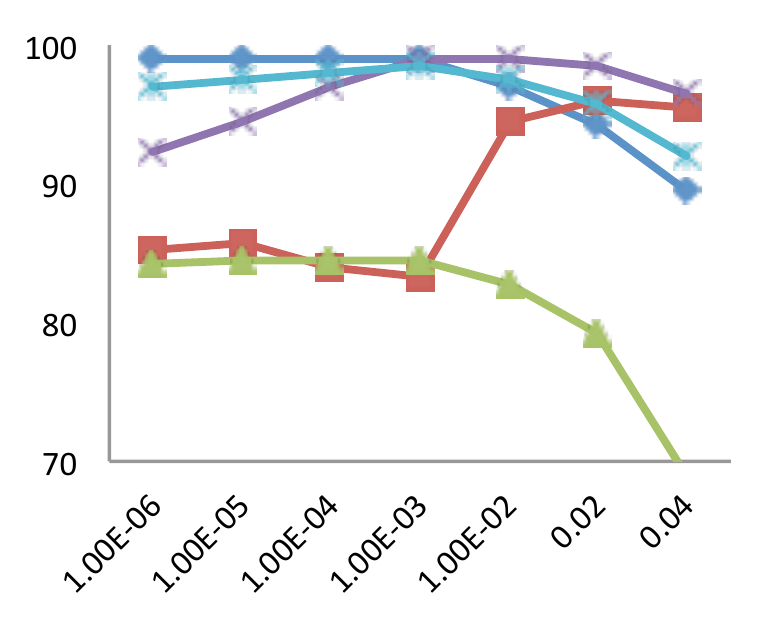}
}\hspace{-0.5cm}%
\subfigure[]{
   \includegraphics[width=3.65cm]{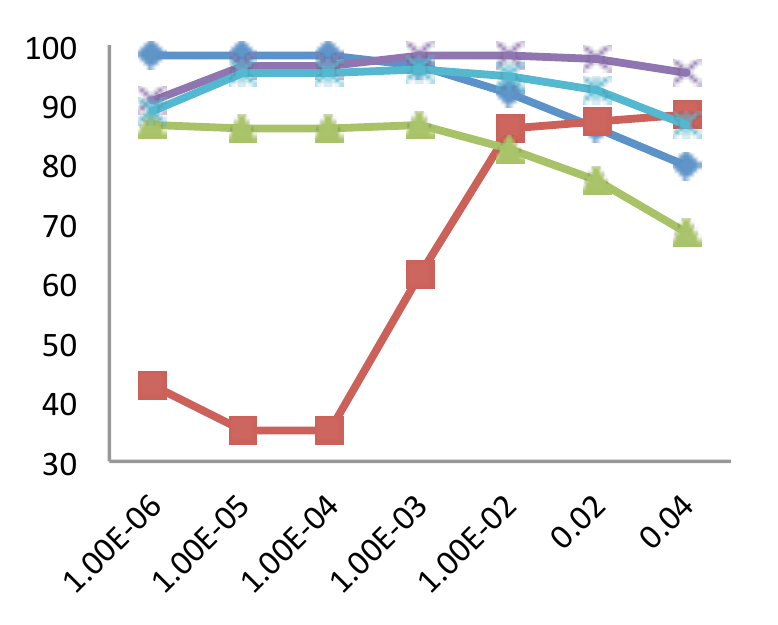}
}\\[-0.1cm]
   \includegraphics[width=12cm]{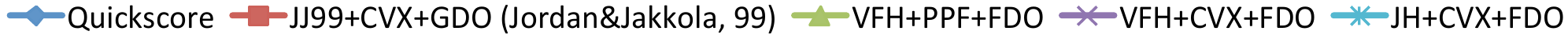}
   \vspace{-0.3cm}
\caption{ Accuracy comparisons on \texttt{F120}. $x$-axis is the mean $P(d^+)$ value (i.e., ${1\over 1+p}$); $y$-axis is top-1/top-3 accuracy. All configurations (except Quickscore) transforms 2 findings variationally.  \newline (a, e) \texttt{random20}. (b, f)  \texttt{chronic20}. (c, g) \texttt{chronic40}. (d, h) \texttt{confuse20}. \newline (a, b, c, d) measure top-1 accuracies. (e, f, g, h) measure the corresponding top-3 accuracies.}
\label{fig:accuracy}
\end{figure}

\section{Conclusions and future work}
In this work, we study the important problem of approximate inference on noisy-or Bayesian networks (specifically, their medical applications). We introduce novel algorithms for variational hybridization and variational transformation. The proposed algorithms greatly immunize the current variational inference algorithms against the inaccuracies in widely-ranged hidden prior probabilities, a common issue that arises in modern medical applications of Bayesian networks. In the future, we plan to investigate the applicability of the proposed algorithms to more general Bayesian networks.

\small
\bibliography{nips_2016}
\end{document}